\icmltitlerunning{Learning Optimal Linear Regularizers}
\newtheorem{corollary}{Corollary}
\newtheorem{definition}{Definition}
\newtheorem{lemma}{Lemma}
\newtheorem{proposition}{Proposition}
\newtheorem{theorem}{Theorem}
\DeclareMathOperator*{\argmin}{argmin}
\newcommand{\E}{\mathbb{E}}
\newcommand{\ignore}[1]{}
\newcommand{\reals}[0]{\mathbb{R}}
\newcommand{\set}[1]{\ensuremath{\left\{#1\right\}}}
\newcommand{\surl}[1]{\begin{small}\url{#1}\end{small}}
\newcommand{\tup}[1]{\langle#1\rangle}
\newenvironment{varalgorithm}[1]
  {\algorithm}
  {\endalgorithm}
\newcommand{\concat}{\ensuremath{\oplus}}
\newcommand{\D}{\mathcal{D}}
\newcommand{\DD}{\mathcal{D}_1}
\newcommand{\ee}{\ensuremath{\mbox{ .}}}
\newcommand{\ex}{z}
\newcommand{\exset}{Z}
\newcommand{\feasible}{\mathcal{F}}
\newcommand{\mf}{q}
\newcommand{\R}{\mathcal{R}}
\newcommand{\sas}{\mathsf{SAS}}
\newcommand{\w}{\theta}
\newcommand{\W}{\Theta}
\newcommand{\V}{V}
\newcommand{\fv}{\phi}
\newcommand{\fvz}{\phi_0}
\newcommand{\eL}{\bar L}
\begin{document}

\twocolumn[
\icmltitle{Learning Optimal Linear Regularizers}

\begin{icmlauthorlist}
	\icmlauthor{Matthew Streeter}{google}
\end{icmlauthorlist}

\icmlaffiliation{google}{Google Research}

\icmlcorrespondingauthor{Matthew Streeter}{mstreeter@google.com}

\vskip 0.3in
]

\printAffiliationsAndNotice{}

\begin{abstract}
We present algorithms for efficiently learning regularizers that
improve generalization.
Our approach is based on the insight that regularizers can be viewed
as upper bounds on the generalization gap, and that reducing
the slack in the bound can improve performance on test data.
For a broad class of regularizers,
the hyperparameters that give the best upper bound can be computed
using linear programming.
Under certain Bayesian assumptions, solving the LP lets us ``jump" to
the optimal hyperparameters given very limited data.
This suggests a natural algorithm for tuning regularization hyperparameters,
which we show to be effective on both real and synthetic data.
\end{abstract}

\section{Introduction}

Most machine learning models are obtained by minimizing a loss function,
but optimizing the training loss is rarely the ultimate goal.
Instead, the model is ultimately judged based on information that is
unavailable during training, such as performance on held-out test data.
The ultimate value of a model therefore depends critically on the
loss function one chooses to minimize.

Traditional loss functions used in statistical learning are the sum of two
terms: the empirical training loss and a regularization penalty.
A common regularization penalty is the $\ell^1$ or $\ell^2$
norm of the model parameters.
More recently, it has become common to regularize implicitly by
perturbing the examples, as in dropout \cite{srivastava2014dropout},
or perturbing the
labels, as in label smoothing \cite{szegedy2016rethinking},
or by modifying the training
algorithm, as in early stopping \cite{caruana2001overfitting}.

The best choice of regularizer is usually not obvious \emph{a priori}.
Typically one chooses a regularizer that has worked well on similar
problems, and then fine-tunes it by searching for the hyperparameter values
that give the best performance on a held-out validation set.
Though this approach can be effective, it tends to require a large
number of training runs, and to mitigate this
the number of hyperparameters must be kept fairly small in practice.

In this work, we seek to recast the problem of choosing regularization
hyperparameters as a supervised learning problem which can be solved
more efficiently than is possible with a purely black-box
approach.  Specifically, we show that the optimal regularizer is
by definition the one that provides the tightest possible bound on
the generalization gap (i.e., difference between test and training loss),
for a suitable notion of ``tightness".  We then present an algorithm
that can find approximately optimal regularization hyperparameters
efficiently via linear programming.
Our method applies to explicit regularizers such as L2,
but also in an approximate way to implicit regularizers such as dropout.

Our algorithm takes as input a small set of models for which we have computed
both training and validation loss, and produces a set of recommended
regularization hyperparameters as output.  Under certain Bayesian
assumptions, we show that our algorithm returns the optimal regularization
hyperparameters, requiring data from as few as \emph{two}
training runs as input.
Building on this linear programming algorithm, we present a hyperparameter
tuning algorithm and show that it outperforms state-of-the-art
alternatives on real problems where these assumptions do not hold.

\subsection{Definitions and Notation} \label {sec:definitions}

We consider a general learning problem with an arbitrary loss function.
Our goal is to choose a hypothesis $\w \in \W$,
so as to minimize the expected value of a loss function $\ell(\ex, \w)$
for an example $\ex$ drawn from an unknown distribution $\D$.  That is,
we wish to minimize
\[
	L(\w) = \E_{\ex \sim \D}[\ell(\ex, \w) ] \ee
\]
In a typical supervised learning problem, $\w$ is a parameter vector,
$\ex$ is a (feature vector, label) pair, and $\ell$ is a loss function
such as log loss or squared error.  In an unsupervised problem, $\ex$
might be an unlabeled image or a text fragment.

We assume as input
a set $\set{\ex_i\ |\ 1 \le i \le n}$ of training examples.
Where noted, we assume each $\ex_i$ is sampled independently from $\D$.
We denote average training loss by:
\[
	\hat L(\w) = \frac 1 n \sum_{i=1}^n \ell(\ex_i, \w) \ee
\]

We will focus on algorithms that minimize an objective function
$f(\w) = \hat L(\w) + R(\w)$,
where $R: \W \rightarrow \reals_{> 0}$ is the regularizer (the choice
of which may depend on the training examples).
We denote the minimizer of regularized loss by
\[
	\hat \w = \argmin_{\w \in \W} \set { \hat L(\w) + R(\w) }
\]
and the optimal hypothesis by
\[
	\w^* = \argmin_{\w \in \W} \set {L(\w)} \ee
\]
We refer to the gap $L(\hat \w) - L(\w^*)$ as \emph{excess test loss}.
Our goal is to choose $R$ so that the excess test loss is
as small as possible.

\section{Regularizers and Generalization Bounds} \label {sec:connection}

Our strategy for learning regularizers will be to compute a regularizer that
provides the tightest possible estimate of the generalization gap (difference
between test and training loss).
To explain the approach,
we begin with a mathematically trivial yet surprisingly useful observation:
\[
	R(\w) = L(\w) - \hat L(\w) \implies L(\hat \w) = L(\w^*) \ee
\]
That is, the generalization gap is by definition an optimal regularizer, since
training with this regularizer amounts to training directly on the test loss.
More generally,
for any monotone function $h$, the regularizer $R(\w) = h(L(\w)) - \hat L(\w)$ is optimal.

Though training directly on the test loss is clearly not a workable strategy,
we might hope to use a regularizer that accurately estimates the
generalization gap, so that regularized training loss estimates test loss.  What makes a good estimate in this context?

In supervised learning
we typically seek an estimate with good average-case performance, for example
low mean squared error.  However, that fact that $\hat \w$
is obtained by optimizing over all $\w \in \W$ means that a single bad
estimate could make $L(\hat \w)$ arbitrarily bad, suggesting that worst-case error matters.  At the same time, it should be less important to estimate $L(\w)$
accurately if $\w$ is far from optimal.  The correct characterization turns
out to be:

\begin{quote}
	\emph{A good regularizer is one that provides an upper bound on the generalization
	gap that is tight at near-optimal points.}
\end{quote}

\begin{figure} [h]
	\begin{center}
	\includegraphics[width=3in]{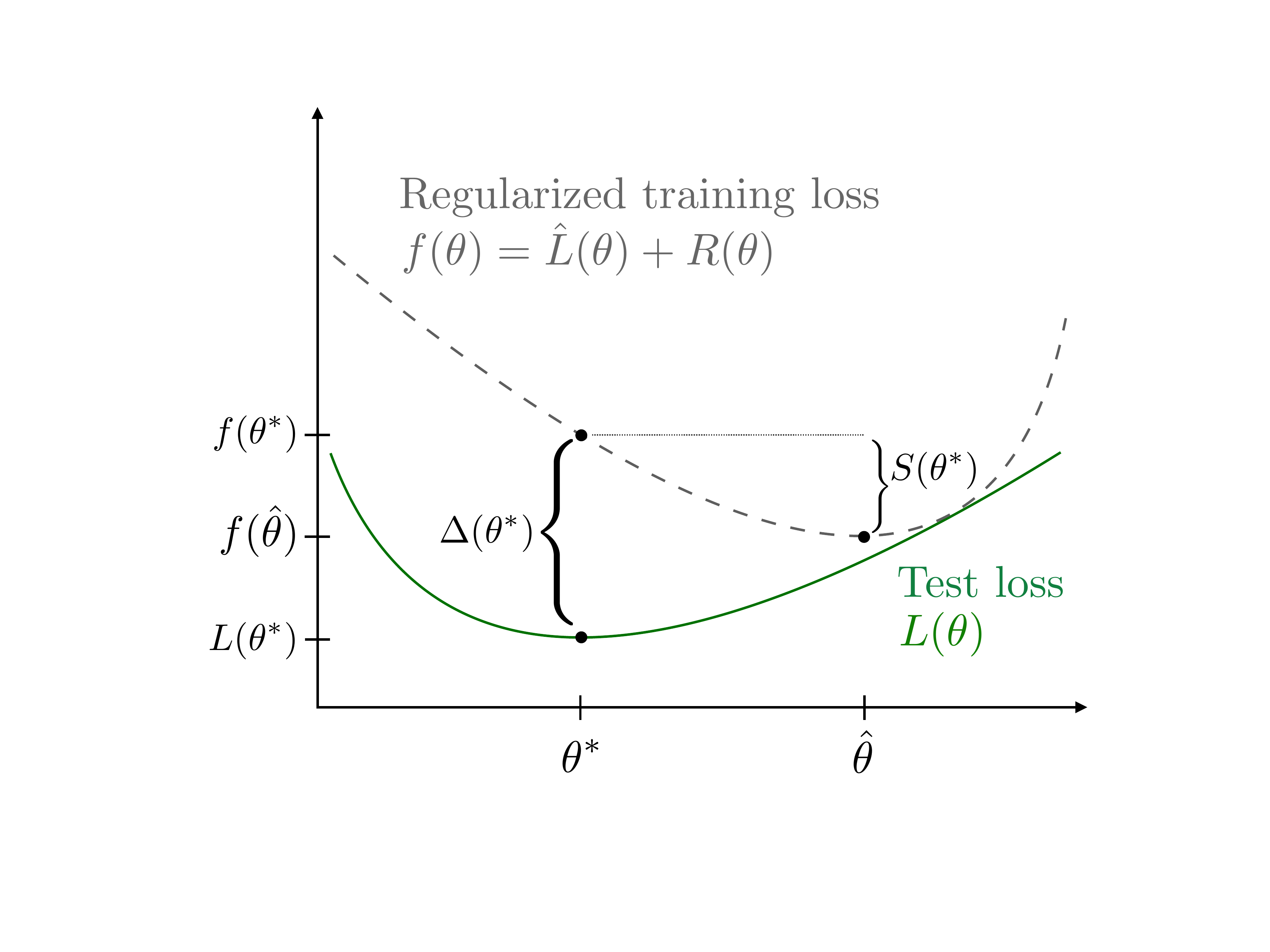}
	\caption{Suboptimality ($S$) and slack ($\Delta$).}
\label{fig:suboptimality}
\end{center}
\end{figure}

To formalize this, we introduce two quantities.
For a fixed regularizer $R$,
defining a regularized training loss $f(\w) = \hat L(\w) + R(\w)$,
define the \emph {slack} of $R$ at a point $\w$ as
\[
	\Delta(\w) \equiv f(\w) - L(\w) \ee
\]
For any $\w$, define the \emph {suboptimality} as
$S(\w) \equiv f(\w) - f(\hat \w)$.
Figure~\ref{fig:suboptimality} illustrates these definitions.

We now give an expression for excess test loss in terms of slack and
suboptimality.  For any $\w$, $L(\w) = f(\w) - \Delta(\w)$, so
\begin{align*}
L(\hat \w) - L(\w) & = f(\hat \w) - \Delta(\hat \w) - f(\w) + \Delta( \w) \\
	& = \Delta(\w) - \Delta(\hat \w) - S(\w) \\
	& \equiv \sas(\w) \mbox { .}
\end{align*}
We refer to the quantity $\sas(\w)$ as \emph{suboptimality-adjusted slack}.
Maximizing both sides over all $\w \in \W$ gives an expression for
excess test loss:
\[
	L(\hat \w) - L(\w^*) = \max_{\w \in \W} \set { \sas(\w) } \ee
\]
That is, the excess test loss of a hypothesis $\hat \w$ obtained by minimizing
$\hat L(\w) + R(\w)$ is the worst-case suboptimality-adjusted slack.
An optimal regularizer
is therefore one that minimizes this quantity.
This is summarized in the following proposition.

\begin{proposition} \label {prop:bound}
For any set $\W$ of hypotheses, and any set $\R$ of regularizers, the
optimal regularizer is
\[
	\argmin_{R \in \R} \set { L(\hat \w(R)) }
	= \argmin_{R \in \R} \set { \max_{\w \in \W} \set { \sas(\w; R) } }
\]
where $\hat \w$ and $\sas$ are defined as above, with the dependence on $R$
now made explicit.
\end{proposition}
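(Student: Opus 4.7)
The proof is essentially contained in the discussion preceding the proposition, so my plan is to collect those steps into a clean argument and then apply a single observation about what depends on $R$. First I fix an arbitrary $R \in \R$, set $f(\w) = \hat L(\w) + R(\w)$, $\hat \w = \hat \w(R) = \argmin_{\w \in \W} f(\w)$, $\Delta(\w) = f(\w) - L(\w)$, and $S(\w) = f(\w) - f(\hat \w)$, all as in the text. The central algebraic identity I need is
\[
L(\hat \w) - L(\w) = \Delta(\w) - \Delta(\hat \w) - S(\w) = \sas(\w; R),
\]
which I would verify by substituting $L(\cdot) = f(\cdot) - \Delta(\cdot)$ on the left and cancelling the two $f$-terms against $-S(\w) = f(\hat\w) - f(\w)$. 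This is a routine check but the place where I would be careful about signs.

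Next I would take $\max_{\w \in \W}$ of both sides. On the left, $L(\hat \w)$ is constant in $\w$, so
\[
\max_{\w \in \W}\{L(\hat \w) - L(\w)\} = L(\hat \w) - \min_{\w \in \W} L(\w) = L(\hat \w(R)) - L(\w^*),
\]
using the definition of $\w^*$. This yields the key identity $L(\hat \w(R)) = L(\w^*) + \max_{\w \in \W} \sas(\w; R)$.

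Finally, since $L(\w^*)$ depends only on $\W$ and $\D$ and not on the choice of $R \in \R$, it is an additive constant with respect to the minimization over $R$. Therefore taking $\argmin_{R \in \R}$ on both sides preserves the minimizer, giving the stated equality of argmins. The only subtlety worth flagging is that both $\hat \w(R)$ and $\sas(\w; R)$ depend on $R$, so one might worry the two minimizations are not obviously equivalent; the reason they are is precisely that the difference between $L(\hat \w(R))$ and $\max_\w \sas(\w; R)$ is the $R$-independent quantity $L(\w^*)$. There is no real obstacle here beyond this definition-chase.
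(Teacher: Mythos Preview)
Your proposal is correct and follows exactly the derivation the paper gives in the paragraphs immediately preceding the proposition: verify $L(\hat\w)-L(\w)=\sas(\w;R)$, maximize over $\w$ to get $L(\hat\w(R))-L(\w^*)=\max_{\w}\sas(\w;R)$, and then observe that the $R$-independent constant $L(\w^*)$ drops out of the $\argmin_R$. The only thing you add beyond the paper is making the last step (why the additive constant doesn't affect the argmin) explicit, which is fine.
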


How can we make use of Proposition~\ref{prop:bound} in practice?
We do not of course know the test loss for all $\w \in \W$, and thus
we cannot compute $\max_{\w \in \W} \set {\sas(\w; R)}$ exactly.  However,
it is feasible to compute the validation loss for a small set
$\W_0 \subseteq \W$ of hypotheses, for example by doing multiple training runs
with different regularization hyperparameters, or doing a single run with
different thresholds for early stopping.
We can then compute an approximately optimal regularizer using the
approximation:
\begin{equation} \label{eq:approx}
	R^* \approx \argmin_{R \in \R} \set {\max_{\w \in \W_0} \set {\hat \sas(\w; R)} }
\end{equation}
where $\hat \sas$ is an estimate of $\sas$ that uses validation loss as a
proxy for test loss, and uses $\W_0$ as a proxy for $\W$.

Importantly, the $R^*$ given by equation~\ref{eq:approx} will generally
\emph{not} be one of the regularizers we already tried when producing
the models in $\W_0$, as is shown formally in Theorem~\ref{thm:efficiency}.

This suggests a simple iterative procedure for tuning regularization
hyperparameters.  Initially, let $\W_0$ be a small set of hypotheses
trained using a few different hyperparameter settings.  Then, use
approximation \eqref{eq:approx} to compute an approximately optimal
regularizer $R_1$ based on $\W_0$.  Then, train a hypothesis
$\w_1$ using $R_1$, and add it to $\W_0$ to obtain a new set
$\W_1 = \W_0 \cup \set{ \w_1  }$, use $\W_1$ to obtain a better
approximation $R_2$, and so on.

\subsection{Implicit Regularizers}

So far we have assumed the regularizer is an explicit function of the
model parameters, but many regularizers used in deep learning
do not take this form.  Instead, they
operate implicitly by perturbing the weights, labels, or
examples.  Is Proposition~\ref{prop:bound} still useful in this case?

It turns out to be straightforward to accomodate such regularizers.
To illustrate, let $P(\w)$ be a (possibly randomized) perturbation applied
to $\w$.  Training with the loss function $\E[\hat L(P(\w))]$ is equivalent to
using the regularizer
\[
	R(\w) = \E[\hat L(P(\w))] - \hat L(\w) \ee
\]
In the case of dropout, $P(\w)$ sets each activation in
a layer to 0 independently with some probability $p$, which is equivalent
to setting the corresponding outgoing weights to zero.  The regularizer
is simply the expected difference between training loss using the perturbed
weights and training loss using the original weights.  This observation,
together with Proposition~\ref{prop:bound}, yields the following
conclusion:

\begin{quote}
\emph{The best dropout probability is the one that makes the gap between
training loss with and without dropout be the tightest possible estimate of the
generalization gap (where tightness is
worst-case suboptimality-adjusted slack).}
\end{quote}

Similar constructions can be used to accomodate implicit regularizers that
perturb the labels (as in label smoothing) or the input data (as in data
augmentation).
More generally, we can view any perturbation
as a potentially useful regularizer.  For example, training with
quantized model weights can be viewed as using a regularizer equal to
the increase in training loss that quantization introduces.  Quantization
will be effective as a regularizer to the extent that this increase
provides a tight estimate of the generalization gap.

\section {Learning Linear Regularizers} \label {sec:learning}

We now consider the problem of computing an approximately optimal regularizer
from some set $\R$ of possible regularizers,
given as input a set $\W_0$ of hypotheses for which we have computed both
training and validation loss.  In practice, the models in $\W_0$ might be
the result of training for different amounts of time, or using different
hyperparameters.

We will present an algortihm for computing the best \emph{linear regularizer},
defined as follows.
\begin{definition}
A \emph{linear regularizer} is a function of the form
\[
	R(\w; \lambda) = \lambda \cdot \mf(\w)
\]
where $\mf: \W \rightarrow \reals^k$ is a function that, given a model, returns
a feature vector of length $k$.
\end{definition}
Commonly-used regularizers such as L1 and L2 can be expressed as linear
regularizers by including the L1 or L2 norm of the model in the feature vector,
and novel regularizers can be easily defined by including additional features.
We consider the case where $\R = \set{R(\cdot; \lambda): \lambda \in \reals^k}$
is the set of all linear regularizers
using a fixed feature vector $\mf(\w)$.

Dropout is not a linear regularizer, because the implicit regularization
penalty, $R(\w, p) = \E[\hat L(\mathrm{dropout}(\w, p))] - \hat L(\w)$,
varies nonlinearly as a function of the dropout probability $p$.  However, dropout
can be approximated by a linear regularizer using a feature vector of the
form
$\mf(\w) = \langle R(\w, p_1), R(\w, p_2), \ldots, R(\w, p_k) \rangle$,
for a suitably fine grid of dropout probabilties $\set{p_1, p_2, \ldots, p_k}$.

Our algorithm for computing the best linear regularizer is designed to
have two desirable properties:
\begin{enumerate}
  \item \emph{Consistency:} in the limiting case where $\W_0 = \W$,
	  and validation loss is an exact estimate of test loss,
	  it recovers an optimal regularizer.
  \item \emph{Efficiency:} in the case where there exists a regularizer that
    perfectly estimates the generalization gap, we require only $k+1$
    data points in order to recover it.
\end{enumerate}

To describe our algorithm, let $\V(\w)$ be average
loss on a validation set.
To guarantee consistency, it is sufficient that we return
the regularizer $R \in \R$ that minimizes the validation loss of $\hat \w_0(R)$,
where
$\hat \w_0(R)$ is the model in $\W_0$ that minimizes regularized training loss
when $R$ is the regularizer.  That is, we wish to find the
regularizer
\[
   \hat R \equiv \argmin_{R \in \R} \set { \V(\hat \w_0(R))  }
\]
where
$
    \hat \w_0(R) \equiv \argmin_{\w \in \W_0} \set { \hat L(\w) + R(\w) }
$.

This regularizer can be computed as follows.
For each $\w_i \in \W_0$, we solve a linear program to compute a function
of the form $f(\w) = \hat L(\w) + \lambda \cdot \mf(\w)$, subject to
the constraint that $\w_i = \argmin_{\w \in \W_0} \set {f(\w)}$,
or equivalently $f(\w_i) \le f(\w) \ \forall \w \in \W_0$.
Among the $\w_i$'s for which the LP is feasible, the one with
minimum $\V(\w_i)$ determines $\hat R$.  Knowing this,
we consider the $\w_i$'s in ascending order of $V(\w_i)$, stopping as soon
as we find an LP that is feasible.

To guarantee efficiency, we must include additional constraints in our
linear program that break ties in the case where there are multiple values
of $\lambda$ that produce the same argmin of $f$.
As we will show in Theorem~\ref{thm:efficiency}, a sufficient constraint is
that $f$ is an upper bound on
validation loss that minimizes total slack.
Additionally, our linear program gives us the freedom to upper bound
$\alpha \V(\w)$ rather than $\V(\w)$ (for some $\alpha > 0$),
which is necessary for the guarantees
proved in \S\ref{sec:bayes_optimal}.
Pseudocode is given below.

\newcommand{\jg}{i^*}

\begin{varalgorithm}{LearnLinReg}
\begin{algorithmic}
  \caption{}
  \label{alg:llr}
	\STATE {\bfseries Input:} Set of (validation loss, training loss, feature vector) tuples $\set{(\V_i, \hat L_i, \mf_i) \ |\ 1 \le i \le m}$.
   \STATE Sort tuples in ascending order of $\V_i$, and reindex so that
	$\V_1 \le \V_2 \le \ldots \le \V_m$.
   \FOR {$\jg$ from $1$ to $m$}
	\STATE Solve the following linear program:

	\begin{equation*}
\begin{array}{lll}
	\text{minimize} & \sum_{i=1}^m \Delta_i & \\
	\text{subject to}
  & \alpha \ge 0 \\
  & \Delta_i \ge 0 & \forall i \\
  & f_i = \alpha \V_i + \Delta_i & \forall i\\
  & f_i = \lambda \cdot \mf_i + \hat L_i & \forall i\\
  & f_{\jg} \le f_i & \forall i \\
\end{array}
\end{equation*}

	\STATE If the LP is feasible, return $(\alpha, \lambda)$.
   \ENDFOR
   \STATE Return error.
\end{algorithmic}
\end{varalgorithm}

By construction, \ref{alg:llr} returns the linear
regularizer that would have given the best possible validation loss,
when minimizing regularized training loss over all $\w \in \W_0$.
This guarantees consistency, as summarized in Proposition~\ref{prop:consistency}.

\begin{proposition} \label{prop:consistency}
Assuming it terminates successfully, \ref{alg:llr}
returns a pair $(\alpha^*, \lambda^*)$ such
that
\[
	R(\cdot; \lambda^*) = \argmin_{R \in \R} \set { \V(\hat \w_0(R)) } \ee
\]
\end{proposition}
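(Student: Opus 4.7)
The plan is to argue that the LP solved at iteration $\jg$ is feasible if and only if there is a parameter vector $\lambda$ making $\w_{\jg}$ a minimizer of $\hat L(\w) + R(\w;\lambda)$ over $\W_0$. Since the algorithm iterates through the $\w_i$ in ascending order of validation loss and stops at the first feasible LP, terminating successfully means the returned $\lambda^*$ realizes the smallest $\V_i$ that any linear regularizer can produce.

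To make this precise, I would observe that the constraints $f_i = \lambda\cdot\mf_i + \hat L_i$ and $f_{\jg} \le f_i$ for all $i$ are jointly exactly the condition $\hat L_{\jg} + R(\w_{\jg};\lambda) \le \hat L_i + R(\w_i;\lambda)$ for every $i$, i.e., $\w_{\jg} \in \argmin_{\w \in \W_0} \set{\hat L(\w) + R(\w;\lambda)}$. The remaining constraints $\alpha \ge 0$, $\Delta_i \ge 0$, and $f_i = \alpha \V_i + \Delta_i$ restrict only slack variables and admit the trivial feasible completion $\alpha = 0$, $\Delta_i = f_i$ whenever $f_i \ge 0$ for all $i$; these variables are there for the efficiency guarantee of Theorem~\ref{thm:efficiency}, not for consistency.

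Now suppose the algorithm returns at index $\jg^*$ with output $(\alpha^*, \lambda^*)$. Feasibility at $\jg^*$ gives $\w_{\jg^*} \in \argmin_{\w \in \W_0} \set{\hat L(\w) + R(\w;\lambda^*)}$, and hence $\V(\hat \w_0(R(\cdot;\lambda^*))) \le \V_{\jg^*}$. Conversely, for any $R \in \R$, the corresponding $\lambda$ would witness feasibility of the LP at every index $i$ with $\w_i = \hat \w_0(R)$; so infeasibility at every $i < \jg^*$ rules out $\hat \w_0(R) = \w_i$ for those $i$, which gives $\V(\hat \w_0(R)) \ge \V_{\jg^*}$ for every $R \in \R$. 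Combining the two bounds yields the claim.

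The main obstacle is the bookkeeping around ties and the auxiliary variables. When several models attain the same regularized training loss, the definition of $\hat \w_0(R)$ has to be interpreted with a tie-breaking rule preferring lower validation loss, so that $\V(\hat \w_0(R(\cdot;\lambda^*))) = \V_{\jg^*}$ on the nose. One also has to verify that non-negativity of $\Delta_i$ is never the sole cause of infeasibility at some $i < \jg^*$ --- essentially, that $\R$ is interpreted as the set of linear regularizers producing non-negative regularized training losses, which is the natural setting since $R$ is assumed to be positive-valued.
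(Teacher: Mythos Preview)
Your argument is essentially the paper's own: the paper does not give a formal proof of Proposition~\ref{prop:consistency} but simply states that the result holds ``by construction,'' having already explained that the algorithm iterates over the $\w_i$ in ascending validation loss and returns as soon as some $\lambda$ makes $\w_{i^*}$ the regularized-loss minimizer over $\W_0$. Your write-up is a faithful formalization of that sketch, and you correctly flag the two bookkeeping issues (tie-breaking in $\hat\w_0$, and whether the auxiliary slack constraints can spuriously cause infeasibility) that the paper glosses over.
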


We now consider efficiency.
In the case where there exists a $\lambda^*$ that allows for perfect
estimation of validation loss, Theorem~\ref{thm:efficiency} shows that
\ref{alg:llr} can recover $\lambda^*$ provided $\W_0$ contains
as few as $k+1$ hypotheses, where $k$ is the size of the feature vector.

\begin{theorem} \label{thm:efficiency}
Suppose there exists a perfect regularizer, in the sense that for
some vector $\lambda^*$ and scalar $\alpha^* > 0$,
\[
	\alpha^* \V(\w) = \hat L(\w) + \lambda^* \cdot \mf(\w) \quad \quad \forall \w \in \W \ee
\]
Let
$D = \set{ (\V(\w_i), \hat L(\w_i), \mf(\w_i))\ |\ 1 \le i \le k+1 }$
be a set of tuples such
that the vectors $\tup{\V(\w_i), \hat L(\w_i)} \concat \mf(\w_i)$ are linearly independent,
where $k = |\mf(\w)|$.
Then, $LearnLinReg(D)$ will return $(\alpha^*, \lambda^*)$.
\end{theorem}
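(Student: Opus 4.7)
The plan is to show that \ref{alg:llr} succeeds at its very first iteration, $\jg = 1$ (the tuple with smallest validation loss after sorting), and that the unique LP optimizer at that iteration is $(\alpha^*, \lambda^*)$.

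\emph{Feasibility at $\jg = 1$.} I would exhibit the candidate $\alpha = \alpha^*$, $\lambda = \lambda^*$, $\Delta_i = 0$, and $f_i = \alpha^* \V_i = \hat L_i + \lambda^* \cdot \mf_i$. This satisfies every equality and nonnegativity constraint by the perfect-regularizer identity, while the ordering constraint $f_1 \le f_i$ reduces to $\alpha^* \V_1 \le \alpha^* \V_i$, which holds since $\V_1 \le \V_i$ by the sort and $\alpha^* > 0$. Hence the LP is feasible at $\jg = 1$ and the algorithm terminates here; the task reduces to identifying the LP optimizer.

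\emph{Uniqueness of the optimum.} Because $\Delta_i \ge 0$ and the candidate above achieves objective $0$, every optimum has $\Delta_i = 0$ for all $i$. Combining the two equality forms for $f_i$ then yields the $k+1$ linear equations
\[
\alpha \V_i - \hat L_i - \lambda \cdot \mf_i = 0 \qquad (i = 1, 2, \ldots, k+1).
\]
Equivalently, the vector $(\alpha, -1, -\lambda) \in \reals^{k+2}$ lies in the kernel of the $(k+1) \times (k+2)$ matrix whose rows are $\tup{\V_i, \hat L_i} \concat \mf_i$. The linear-independence hypothesis says this matrix has rank $k+1$, so its kernel is $1$-dimensional. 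The perfect-regularizer identity places $(\alpha^*, -1, -\lambda^*)$ in that kernel, so it spans it, and every kernel element is a scalar multiple. The fixed $-1$ in the second coordinate pins that scalar to $1$, forcing $(\alpha, \lambda) = (\alpha^*, \lambda^*)$, which is what \ref{alg:llr} then returns.

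The main obstacle I anticipate is the final linear-algebra bookkeeping: folding the LP's two forms for $f_i$ into a single homogeneous system, recognizing that the $k+1$ linearly independent rows leave a one-dimensional annihilator, and tracking the fixed $-1$ in the second coordinate to single out the correct scalar multiple. Everything else --- feasibility at $\jg = 1$, the fact that $\sum_i \Delta_i = 0$ is the true LP minimum, and termination of the outer loop --- is essentially direct.
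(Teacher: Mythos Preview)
Your proposal is correct and follows essentially the same approach as the paper's own proof: show the first LP is feasible with objective value $0$ at $(\alpha^*,\lambda^*,\Delta_i=0)$, conclude every optimum has $\Delta_i=0$, and then argue the resulting linear system pins down $(\alpha,\lambda)=(\alpha^*,\lambda^*)$. Your treatment is actually slightly more careful than the paper's in two places: you explicitly verify the ordering constraint $f_1\le f_i$ via $\alpha^*>0$ and the sort, and your kernel argument (one-dimensional nullspace of the rank-$(k{+}1)$ matrix, with the fixed $-1$ coordinate selecting the correct scalar multiple) makes precise what the paper compresses into ``by assumption the equations are linearly independent.''
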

\begin{proof}
Under these assumptions, the first LP considered by \ref{alg:llr}
has a feasible point $\alpha = \alpha^*$, $\lambda = \lambda^*$, and $\Delta_i = 0 \ \forall i$.  Because each $\Delta_i$ is constrained to be
non-negative, this point must be optimal, and thus any optimal point must
have $\Delta_i = 0 \ \forall i$.
Thus, any optimal point must satisfy $\alpha \V(\w_i) = \lambda \cdot \mf(\w_i) + \hat L(\w_i) \ \forall i$.  This is a system of linear equations with $k+1$ variables, and
by assumption the equations are linearly independent.
Thus the solution is unique, and the algorithm
returns $(\alpha^*, \lambda^*)$.
\end{proof}

If desired, \ref{alg:llr} can be easily modified to only consider $\lambda$ vectors in
a feasible set $\feasible = \prod_{j=1}^k [\lambda^{\mathrm{min}}_j, \lambda^{\mathrm{max}}_j]$.

\subsection{Hyperparameter Tuning}

\newcommand{\lambdaseen}{\Lambda_{\mathrm{seen}}}
\newcommand{\train}{\mathrm{train}}
\newcommand{\samplehparams}{\mathrm{random\_sample}}
\begin{varalgorithm}{TuneReg}
\begin{algorithmic}
  \caption{}
  \label{alg:tune_reg}
	\STATE {\bfseries Input:} training loss $\hat L(\w)$, validation loss $\V(\w)$,
	feature vector $\mf(\w)$, initial set $\Lambda_0$ of hyperparameter vectors,
	training algorithm $\train(\lambda)$,
	hyperparameter vector sampler $\samplehparams()$.
	\STATE Set $\W_0 \leftarrow \set{\train(\lambda)\ |\ \lambda \in \Lambda_0}$, $\lambdaseen \leftarrow \Lambda_0$.
    \STATE Set $D \leftarrow \set{(\V(\w), \hat L(\w), \mf(\w))\ |\ \w \in \Theta_0}$.
    \WHILE {True}
	\STATE Set $\_, \lambda \leftarrow \mathrm{LearnLinReg}(D)$.
	\STATE If $\lambda \in \lambdaseen$, set $\lambda \leftarrow \samplehparams()$.
	\STATE Set $\w \leftarrow \train(\lambda)$, and set $\lambdaseen \leftarrow \Lambda \cup \set{\lambda}$
	\STATE Set $D \leftarrow D \cup \set{(\V(\w), \hat L(\w), \mf(\w))}$.
    \ENDWHILE
\end{algorithmic}
\end{varalgorithm}

We now describe how to use \ref{alg:llr} for
hyperparameter tuning.  Given an initial set of hyperparameter vectors,
we train using each one, and observe the resulting training and validation loss,
as well as the feature vector for the trained model.  We then feed
this data to \ref{alg:llr} to obtain a vector $\lambda$
of regularization hyperparameters.  We then train using these hyperparameters,
add the results to our dataset, and re-run \ref{alg:llr}.
Experiments using this algorithm are presented in \S\ref{sec:experiments}.

\section {Recovering Bayes-Optimal Regularizers} \label {sec:bayes_optimal}

We have shown that a regularizer is optimal if it can be used to
perfectly predict the generalization gap, and have presented an
algorithm that can efficiently recover a \emph{linear} regularizer if a
perfect one exists.  Do perfect linear regularizers ever exist?
Perhaps surprisingly, the answer is ``yes" for a broad class of Bayesian
inference problems.

As discussed in \S\ref{sec:definitions},
we assume examples are drawn from a distribution $\D$.  In the Bayesian
setting, we assume that $\D$ itself is drawn from
a (known) prior distribution $\DD$.  Given a training dataset $\exset \sim \D^n$, where $\D \sim \DD$, we now
care about the conditional expected test loss:
\[
	\eL(\w, \exset) \equiv E_{\D \sim \DD}[L(\w, \D) | \exset]
\]
where $L(\w, \D) = \E_{\ex \sim \D}[\ell(\ex, \w)]$ as in \S\ref{sec:definitions}.
A Bayes-optimal regularizer is one which minimizes this quantity.
\begin{definition}
Given a training set $\exset \sim \D^n$, where $\D \sim \DD$,
a Bayes-optimal regularizer
is a regularizer $R^*$ that satisfies:
\[
	\argmin_{\w \in \W} \set { \hat L(\w) + R^*(\w) }
	= \argmin_{\w \in \W} \set { \eL(\w, \exset) } \ee
\]
$R^*$ is {\bf perfect} if, additionally, it satisfies the stronger condition
\[
  \hat L(\w) + R^*(\w) = h(\eL(\w, \exset)) \ \forall \w \in \W
\]
for some monotone function $h$.
\end{definition}

Theorem~\ref{thm:exponential} shows that a perfect Bayes-optimal regularizer
exists for density estimation problems where log loss is the loss function,
$\w$ parameterizes an exponential family distribution $\D$ from which examples
are drawn, and $\DD$ is the conjugate prior for $\D$.

\begin{theorem} \label{thm:exponential}
Let $P(\ex | \w)$ be an exponential family distribution with natural
parameter $\eta(\w)$ and conjugate prior
$P(\w)$, and suppose the following hold:
\begin{enumerate}
  \item $\ex \sim \D$, where $\D = P(\ex | \w^*)$.
  \item $\ell(\ex, \w) = -\log P(\ex | \w)$.
  \item $\w^* \sim P(\w)$.
\end{enumerate}
Then, for any training set $\exset \sim \D^n$, $R^*$ is a perfect, Bayes-optimal regularizer, where
\[
	R^*(\w) = - \frac 1 n \log P(\eta(\w)) \ee
\]
\end{theorem}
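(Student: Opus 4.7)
The plan is to reduce everything to the natural parameter $\eta(\w)$ and exploit the standard form of exponential families with conjugate priors. Write the density as $\log P(\ex \mid \w) = \eta(\w)^\top T(\ex) - A(\eta(\w)) + \log h(\ex)$, with sufficient statistic $T$, log-partition $A$, and base measure $h$; and write the conjugate prior in standard form as $\log P(\eta) = \tau^\top \eta - \nu A(\eta) + g(\tau,\nu)$, with hyperparameters $(\tau,\nu)$.

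First, I would expand $\hat L(\w) + R^*(\w)$ directly. Substituting both displays and collecting every term independent of $\w$ into a constant $C_1(\exset)$ yields
\[
  \hat L(\w) + R^*(\w) = \frac{n+\nu}{n}\left[A(\eta(\w)) - \eta(\w)^\top \frac{n\bar T + \tau}{n+\nu}\right] + C_1(\exset),
\]
where $\bar T = \frac{1}{n}\sum_{i=1}^n T(\ex_i)$.

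Next, I would write $\eL(\w, \exset)$ in the same form. By iterated expectation and the exponential-family mean identity $\E_{\ex\mid\w^*}[T(\ex)] = \nabla A(\eta(\w^*))$,
\[
  \eL(\w,\exset) = A(\eta(\w)) - \eta(\w)^\top \E_{\w^*\mid\exset}[\nabla A(\eta(\w^*))] + C_2(\exset).
\]
The key identity is then
\[
  \E_{\w^* \mid \exset}[\nabla A(\eta(\w^*))] = \frac{\tau + n\bar T}{\nu + n},
\]
which I would obtain from two standard facts: (i) the posterior on $\eta$ is again of conjugate-prior form with updated hyperparameters $\tau' = \tau + n\bar T$, $\nu' = \nu + n$; and (ii) for any distribution of conjugate-prior form, differentiating the normalization identity $\int \exp(\tau^\top \eta - \nu A(\eta))\,d\eta = \exp(-g(\tau,\nu))$ in $\tau$ gives $\E[\nabla A(\eta)] = \tau/\nu$ (equivalently, this is the posterior predictive mean of $T(\ex)$).

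Combining the three displays shows $\hat L(\w) + R^*(\w) = \tfrac{n+\nu}{n}\,\eL(\w,\exset) + c(\exset)$ for some $c$ not depending on $\w$. Since the slope is strictly positive, $h(x) = \tfrac{n+\nu}{n}x + c(\exset)$ is strictly increasing, so $R^*$ satisfies the perfect-regularizer condition, and hence is Bayes-optimal. The main obstacle I anticipate is carefully justifying $\E[\nabla A(\eta)] = \tau/\nu$ under sufficient regularity to exchange differentiation and integration; the remainder is algebraic bookkeeping in the exponential-family parameterization.
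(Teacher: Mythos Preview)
Your approach is essentially the same as the paper's: both hinge on the Diaconis--Ylvisaker posterior-linearity identity $\E_{\w^*\mid\exset}[\nabla A(\eta(\w^*))] = (\tau + n\bar T)/(\nu + n)$ and conclude $\hat L(\w) + R^*(\w) = \tfrac{n+\nu}{n}\,\eL(\w,\exset) + c(\exset)$. The paper factors the algebra through a short lemma on bilinear losses (writing $\ell(\ex,\w)$ as an inner product and citing \cite{diaconis1979conjugate} for the posterior mean), whereas you compute directly in the $A(\eta)$ parameterization; these are the same argument in different notation.

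One technical correction to your step (ii): differentiating the normalizing identity $\int \exp(\tau^\top \eta - \nu A(\eta))\,d\eta = \exp(-g(\tau,\nu))$ in $\tau$ yields $\E[\eta]$, not $\E[\nabla A(\eta)]$. The identity $\E[\nabla A(\eta)] = \tau/\nu$ instead follows from the score condition $\E[\nabla_\eta \log p(\eta)] = 0$ applied to $\log p(\eta) = \tau^\top\eta - \nu A(\eta) + \mathrm{const}$, i.e.\ integration by parts with vanishing boundary terms---which is precisely the regularity hypothesis in Diaconis--Ylvisaker. With that fix, the rest of your derivation is correct.
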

\ignore{
\begin{proof}[Proof (sketch)]
By assumption, $P(\ex | \w)$ is an exponential family distribution,
meaning that for some functions $h$, $g$, $\eta$, and $T$, we have
\[
	P(\ex | \w) = h(\ex) g(\w) \exp(\eta(\w) \cdot T(\ex)) \ee
\]
The conjugate prior for an exponential family has the form
\[
	P(\eta(\w)) \propto g(\w)^{n_0} \exp(\eta(\w) \cdot \tau_0)
\]
where $\tau_0$ and $n_0$ are hyperparameters.  One of the distinguishing
properties of exponential families is that when $\w^*$ is drawn
from a conjugate prior, the posterior expectation of $T(\ex)$
has a linear form \cite{diaconis1979conjugate}:
\[
	\E_{\w^* \sim P(\w)} [ \E_{\ex \sim P(\ex | \w^*)}[ T(\ex) ]\ |\ \exset] =  \frac {\tau_0 + \sum_{i=1}^n T(\ex_i)} {n_0 + n} \ee
\]
Using these equations and linearity of expectation, some algebra shows that for any $\w$,
\[
	\hat L(\w) + R^*(\w) = \frac { n + n_0} {n} \eL(\w, \exset)  + K
\]
where $K$ is independent of $\w$.  $R^*$ is therefore perfect and Bayes-optimal.
\end{proof}
}

The proof is given in Appendix A.

In the special case where $\w$ is the natural parameter (i.e., $\eta(\w) = \w$),
Theorem~\ref{thm:exponential} gives $R^*(\w) = - \frac 1 n \log P(\w)$.
Using Bayes' rule, it can be shown that
minimizing $\hat L(\w) + R^*(\w)$ is equivalent to maximizing 
the posterior probability of $\w$ (i.e., performing MAP inference).

\subsection {Example: Coin Flips} \label{sec:coins}

Suppose we have a collection of coins, where coin $i$ comes up heads
with unknown probability $p_i$.  Given a training dataset consisting of
outcomes from flipping each coin a certain number of times, we would like
to produce a vector $\w$, where $\w_i$ is an estimate of $p_i$, so as
to minimize expected log loss on test data.
This can be viewed as a highly simplified version of the problem of
click-through rate prediction for online ads
(e.g., see \citet{mcmahan2013ad}).

For each coin, assume $p_i \sim \mathrm{Beta}(\alpha, \beta)$ for some unknown constants $\alpha$ and $\beta$.
Using the fact that the Bernoulli distribution is a member of the exponential
family whose conjugate prior is the Beta distribution,
and the fact that overall log loss is the sum of the log loss for
each coin, we can prove the following as a corollary of
Theorem~\ref{thm:exponential}.

\begin{corollary} \label{cor:coins}
A Bayes-optimal regularizer for the coin flip problem is given by
\[
    \mathrm{LogitBeta}(\w) \equiv - \frac 1 n \sum_i \alpha \log(\w_i) + \beta \log(1-\w_i) \ee
\]
\end{corollary}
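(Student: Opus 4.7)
My plan is to reduce the corollary to $k$ independent instances of Theorem~\ref{thm:exponential}, one per coin, using the product structure of the problem. Because flips of different coins are independent and the $p_i$ are independent draws from the same $\mathrm{Beta}(\alpha,\beta)$ prior, the training log loss decomposes as $\hat L(\w) = \sum_i \hat L_i(\w_i)$ and the conditional expected test loss decomposes as $\eL(\w,\exset) = \sum_i \eL_i(\w_i,\exset_i)$, where $\exset_i$ is the subset of flips for coin $i$ and the summands have disjoint variables. On such a separable problem, a sum of per-coordinate Bayes-optimal regularizers is itself Bayes-optimal, because the argmin of a sum of functions with disjoint variables is the tuple of per-coordinate argmins. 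So it suffices to exhibit, for each coin, a regularizer $R_i^*(\w_i)$ with $\argmin_{\w_i}\{\hat L_i(\w_i) + R_i^*(\w_i)\} = \argmin_{\w_i} \eL_i(\w_i,\exset_i)$.

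Next I would verify the hypotheses of Theorem~\ref{thm:exponential} for a single coin. Writing the Bernoulli density as $P(x \mid p) = (1-p)\exp\bigl(x \cdot \log(p/(1-p))\bigr)$ exhibits it as an exponential family with natural parameter $\eta(p) = \log(p/(1-p))$, sufficient statistic $T(x)=x$, carrier $g(p) = 1-p$, and $h(x)=1$. In the conjugate form $P(\eta(p)) \propto g(p)^{n_0}\exp(\tau_0 \, \eta(p))$ used in the proof sketch of Theorem~\ref{thm:exponential}, this becomes $(1-p)^{n_0}(p/(1-p))^{\tau_0} = p^{\tau_0}(1-p)^{n_0-\tau_0}$, which is exactly the $\mathrm{Beta}(\alpha,\beta)$ kernel for $\tau_0 = \alpha$, $n_0 - \tau_0 = \beta$. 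Applying Theorem~\ref{thm:exponential} therefore gives a perfect per-coin regularizer $R_i^*(\w_i) = -\frac{1}{n}\log P(\eta(\w_i)) = -\frac{1}{n}\bigl[\alpha \log \w_i + \beta \log(1-\w_i)\bigr] + C_i$, where $C_i$ is an additive constant independent of $\w_i$. Summing over $i$ and discarding the overall $\w$-independent constant yields $\mathrm{LogitBeta}(\w)$, which is Bayes-optimal for the joint problem by the argument of the previous paragraph.

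The main subtleties to be careful about are what ``perfect'' versus ``Bayes-optimal'' requires, and bookkeeping of normalizing constants. Since Bayes-optimality is defined through argmin, per-coin additive constants and the Jacobian of the change of variables between $P(\w)$ and $P(\eta(\w))$ can be absorbed without affecting the result; this is the step I would take the most care with. Note that although each coin enjoys the stronger conclusion of being ``perfect'' in the sense of Theorem~\ref{thm:exponential}, the joint regularizer $\mathrm{LogitBeta}$ need not itself be perfect (the per-coin affine maps relating $\hat L_i + R_i^*$ to $\eL_i$ would need a common slope, which requires each coin to be flipped the same number of times). Bayes-optimality, which is what the corollary claims, goes through regardless.
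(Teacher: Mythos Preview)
Your proposal is correct and follows exactly the route the paper indicates in the text preceding the corollary: decompose by the additive structure of log loss into independent per-coin problems, recognize each as a Bernoulli/Beta instance of Theorem~\ref{thm:exponential}, and combine. Your added care in distinguishing perfectness from mere Bayes-optimality of the summed regularizer (and flagging the Jacobian bookkeeping in passing from $P(\w)$ to $P(\eta(\w))$) goes beyond what the paper spells out but is accurate and welcome.
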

Observe that the LogitBeta regularizer is linear,
with feature vector $\tup{- \sum_i \log(\w_i), -\sum_i \log(1-\w_i)}$.

Given a large validation dataset with many independent coins, it can
be shown that validation loss approaches expected test loss.  Thus,
in the limit,
\ref{alg:llr} is guaranteed to recover the optimal hyperparameters
(the unknown $\alpha$ and $\beta$) when using this feature vector.

\begin{figure*} [h]
	\begin{center}
        \includegraphics[width=3in]{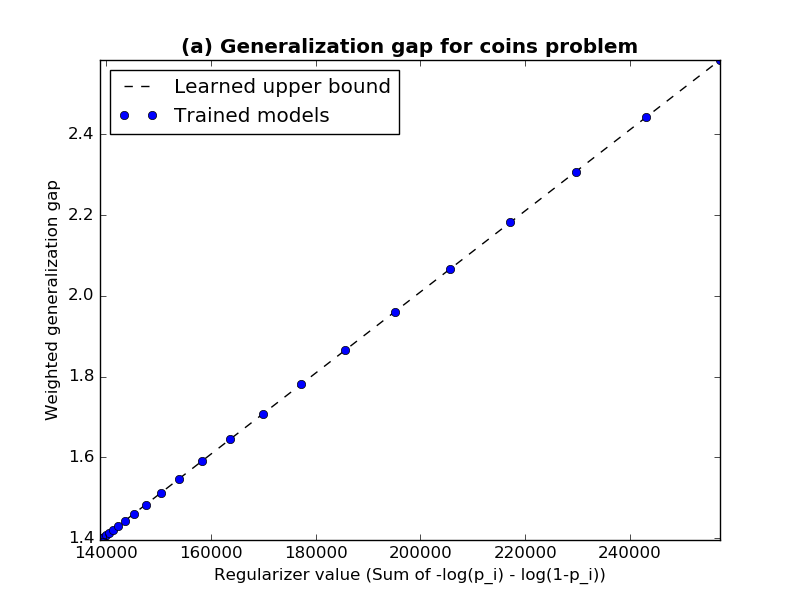}
        \includegraphics[width=3in]{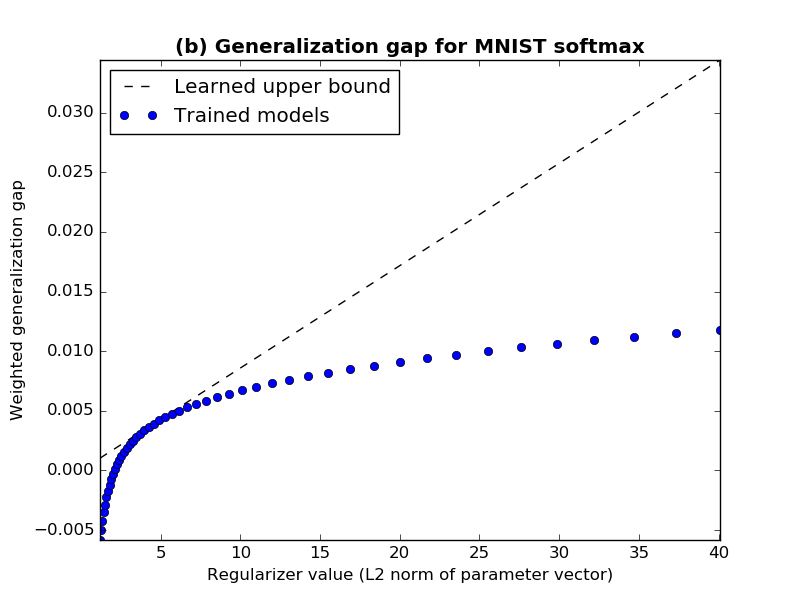}
        \includegraphics[width=3in]{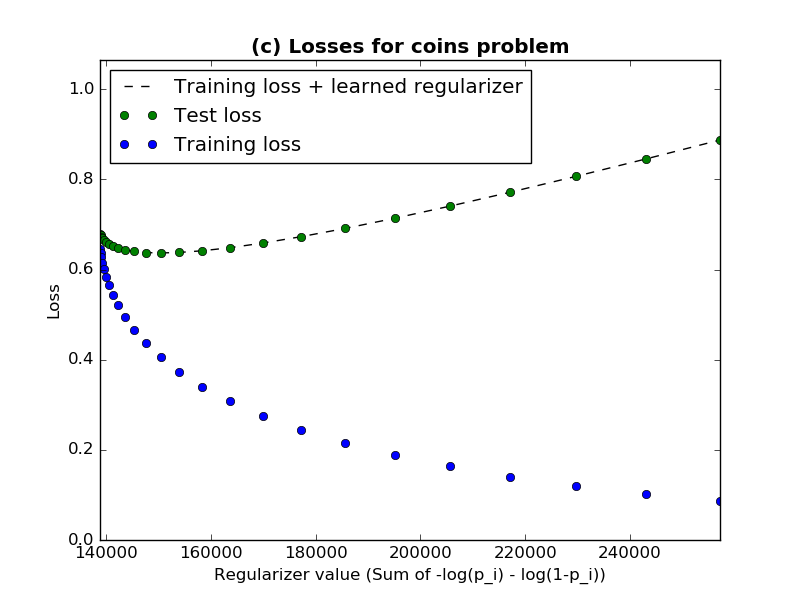}
        \includegraphics[width=3in]{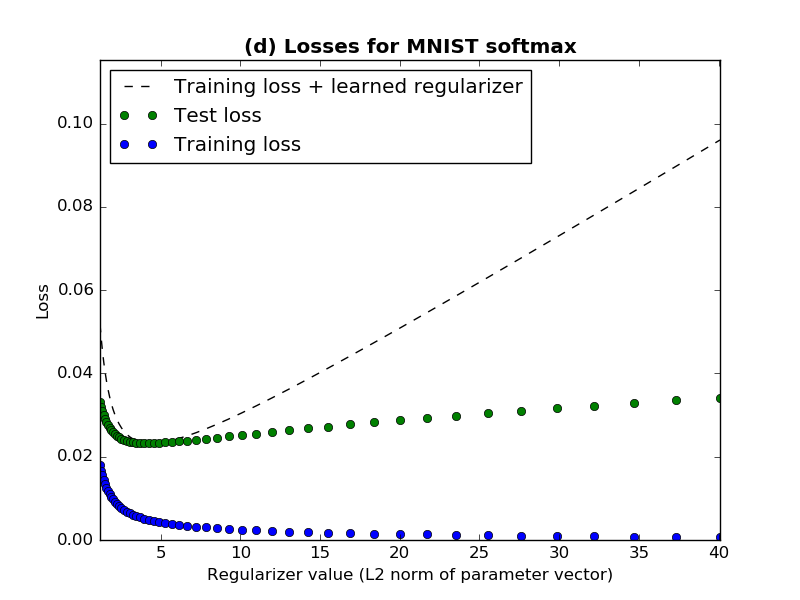}
	\caption{Estimates of the generalization gap for two optimization problems, and corresponding loss functions.  For the coins problem,
		the LogitBeta regularizer perfectly estimates the
		generalization gap for all models.  In contrast,
		for the MNIST softmax
		regression problem, L2 regularization provides an
		upper bound that is only tight for near-optimal models.
		}
\label{fig:learned_regularizers}
\end{center}
\end{figure*}

\begin{table*}[h]
	\caption{Comparison of regularizers, in terms of slack (see Figure~\ref{fig:suboptimality}), test loss, and test accuracy.}
  \label{tab:comparison}
  \centering
	\begin{small}
	\begin{sc}

  \begin{tabular}{llp{1.5cm}p{1.5cm}p{1.5cm}p{2cm}}
    \toprule
			Problem & Regularizer & Max. \newline slack & Max. adj. \newline slack & Min. test \newline loss & Max. test \newline accuracy \\
    \midrule
			Coins($\lambda^*=1$) & LogitBeta & {\bf 0} & {\bf 0} & {\bf 0.637} & -- \\
    \midrule
			\multirowcell{4}[1ex][l]{MNIST \\ Softmax \\ Regression}

  & L1 & 5.83e-1 & 1.65e-3 & 0.0249 & 99.28\% \\
  & L2 & 1.70 & {\bf 1.95e-4} & {\bf 0.0233} & 99.31\% \\
  & Label smoothing & {\bf 1.78e-1} & 2.24e-3 & 0.0254 & 99.31\% \\
  & Dropout (linearized @ $p=.5$) & 6.98e+01 & 2.92e-2 & 0.0463 & {\bf 99.34\%} \\

    \midrule
			\multirowcell{4}[1ex][l]{Inception-v3 \\ Transfer \\ Learning}

  & L1 & {\bf 1.15} & 2.36e-2 & 0.309 & 90.08\% \\
  & L2 & 1.47 & {\bf 1.11e-4} & {\bf 0.285} & {\bf 90.74\%} \\
  & Label smoothing & 5.03 & 8.09e-2 & 0.366 & 89.65\% \\
  & Dropout (linearized @ $p=.5$) & 2.52e+1 & 2.60e-1 & 0.506 & 90.46\% \\

    \bottomrule

  \end{tabular}
  \end{sc}
  \end{small}
\end{table*}

\section {Experiments} \label {sec:experiments}

We now evaluate the \ref{alg:llr} and \ref{alg:tune_reg} algorithms
experimentally, using both real and synthetic data.
Code for both algorithms is included in the supplementary material.

\subsection{Optimization Problems} \label {sec:problems}

We consider three optimization problems.  The first is an instance of
the coin bias estimation problem discussed in
\S\ref{sec:coins}, with $N=10^5$ coins whose true bias is drawn
from a uniform distribution (a Beta distribution with $\alpha = \beta = 1$).
Our training data is the outcome of a \emph{single} flip for each coin,
and we compute the test loss $L(\w)$ exactly.

We then consider two problems that make use of deep networks trained on
MNIST and ImageNet \citep{russakovsky2015imagenet}.
For MNIST, we train a convolutional neural network using a variant of the
LeNet architecture \citep{lecun1998gradient}.
We then consider a softmax regression problem that
involves retraining only the last layer of the network.  This is a convex
optimization problem that we can solve exactly, allowing us to focus on
the impact of the regularizer without worrying about the confounding effects of
early stopping.

We then consider a transfer learning problem.  Starting with an Inception-v3
model trained on ImageNet, we adapt the model to classify images
of flowers from a public dataset \cite{imageretraining2018} split evenly into training and test,
retraining the last layer as in \cite{donahue2014decaf}.

\subsection{Comparison of Regularizers} \label {sec:comparison}

Which regularizers give the tightest generalization bounds?  Does tightness
in terms of slack translate into good test set performance?

To answer these questions, we solve each of the three optimization problems
discussed in \S\ref{sec:problems} using a number of different regularizers.
For the coins problem, we use the LogitBeta regularizer from
Corollary~\ref{cor:coins}, while
for the two softmax regression problems we use L1, L2,
label smoothing, and
a linearized version of dropout.
For label smoothing, the loss function is equivalent to
$\hat L(\w) + \lambda R(\w)$,
where $R(\w)$ is average training loss on a set of uniformly-labeled examples.  For dropout, the regularizer is the
difference between perturbed and unperturbed training loss, with
the dropout probability fixed at .5.

For each problem, we proceed as follows.
For each regularizer
$R_i$, and for each $\lambda$ in a predefined grid $\Lambda_i$,
we generate a model
$\hat \w_{i, \lambda}$ by minimizing $\hat L(\w) + \lambda R_i(\w)$.
For the coins problem, the solution can be found in closed form,
while for the softmax regression problems, we obtain a near-optimal
solution by running AdaGrad for 100 epochs.
Each grid $\Lambda_i$ contains 50 points.  For L1 and L2, the grid is log-uniformly spaced over $[.1, 100]$, while for label smoothing and dropout it is uniform over $[0, 1]$.  The result is a set of models, $\W_i \equiv \set{\hat \w_{i, \lambda}\ |\ \lambda \in \Lambda_i}$.

For each regularizer $R_i$, and each $\w \in \W_i$, we compute the
training loss $\hat L(\w)$, validation loss $\V(\w)$, and regularizer
value $R_i(\w)$.
We then use \ref{alg:llr} to compute an upper bound on validation loss,
using $\mf_i(\w) \equiv \tup{R_i(\w)}$ as the feature vector.
This produces a function of the form
$f_i(\w) = \hat L(\w) + \lambda^*_i R_i(\w)$ such that
$f_i(\w) \ge \alpha_i \V(\w) \ \forall \w \in \W_i$.

Figure~\ref{fig:learned_regularizers} shows the learned upper bounds for
two combinations of problem and regularizer.  In all graphs,
the horizontal axis
is the regularizer value $R_i(\w)$.  The top two graphs compare the
weighted generalization gap, $\alpha_i \V(\w) - \hat L(\w)$, to the learned
upper bound $f_i(\w) - \hat L(\w) = \lambda^*_i R_i(\w)$.
For the coins problem (Figure~\ref{fig:learned_regularizers} (a)),
the learned upper bound is tight for all models, and perfectly
predicts the generalization gap.
In contrast, for the MNIST softmax regression problem with L2 regularization
(Figure~\ref{fig:learned_regularizers} (b)),
the learned upper bound is only tight at near-optimal models.

Figure~\ref{fig:learned_regularizers} (c) and (d) show the corresponding
validation loss and (regularized) training loss.
In both cases, the argmin of
validation loss is very close to the argmin of regularized training loss
when using the learned regularization strength, $\lambda^*_i$.
We see qualitatively similar behavior for the other regularizers on
both softmax regression problems.

Table~\ref{tab:comparison} compares the upper bounds provided by
each regularizers in terms of maximum slack and suboptimality-adjusted
slack.  To make the numbers comparable, the maximum is taken
over \emph{all} trained models (i.e., all $\w \in \bigcup_i \W_i$).  We also
show the minimum test loss and maximum accuracy achieved using
each regularizer.
Observe that:
\begin{itemize}
  \item Except for the coins problem, none of the regularizers produces a upper bound
  whose maximum slack is low (relative to test loss).  However,
L2 regularization achieves uniformly low \emph{suboptimality-adjusted} slack.
  \item The rank ordering of the regularizers in terms of minimum test loss always matches
  the ordering in terms of maximum suboptimality-adjusted slack.  
  \item Dropout achieves high accuracy despite poor slack and log loss,
  suggesting its role is somewhat different than that of the more traditional L1 and L2 regularizers.
\end{itemize}

\subsection{Tuning Regularization Hyperparameters} \label {sec:tune_reg_comparison}

The best values for hyperparameters are typically found
empirically using black-box optimization.  For regularization hyperparameters,
\ref{alg:tune_reg} provides a potentially more efficient way
to tune these hyperparameters, being guaranteed to ``jump" to the optimal
hyperparameter vector in just $k+1$ steps (where $k$ is the number of
hyperparameters) in the special case where a perfect regularizer exists.
Does this theoretical guarantee translate into better performance on real-world
hyperparameter tuning problems?

To answer this question, we compare \ref{alg:tune_reg}
to random search and to Bayesian optimization
using GP-EI-MCMC \cite{snoek2012practical}, on each of the three optimization
problems.  For the coins problem, we use the known optimal LogitBeta
regularizer, while for the two softmax regression problems
we find a linear combination of the regularizers shown in Table~\ref{tab:comparison} (L1, L2, label smoothing,
and linearized dropout).
For each problem, \ref{alg:tune_reg} samples the first $k+1$ points randomly,
where $k$ is the number of hyperparameters.

We consider two variants of each algorithm.  In both cases, random search
and \ref{alg:tune_reg} sample hyperparameter vectors uniformly from a hypercube,
and GP-EI-MCMC uses this hypercube as its feasible set.
In the first variant, the hypercube is based on the full
range of hyperparameter values considered in the previous section.
The second is a more ``informed" variant
based on data collected when generating Table~\ref{tab:comparison}:
the feasible range for label smoothing (where applicable)
is restricted to $[0, .1]$, and log scaling is applied to the L1, L2, and
LogitBeta regularization hyperparameters.  Using log scaling is equivalent
to sampling from a log-uniform distribution for random search and
\ref{alg:tune_reg}, and to tuning the log of the hyperparameter value for
GP-EI-MCMC.

Figure~\ref{fig:tuning} shows the best validation loss achieved by each
algorithm as a function of the number of models trained.
Each curve is an average
of 100 runs.  In all cases, \ref{alg:tune_reg} jumps to near-optimal hyperparameters
on step $k+2$, whether or not the initial $k+1$ random points are sampled
from the informative distribution ($k=1$ in plot (a), and $k=4$ in plots (b) and (c)).  Both variants of \ref{alg:tune_reg} converge much
faster than the competing algorithms, which typically require at least an order of magnitude
more training runs in order to reach the same accuracy.
The results for \ref{alg:tune_reg} can be
very slightly improved by modifying the LP to enforce that all hyperparameters
lie in the feasible range.

\begin{figure}[p]
\begin{center}
	\includegraphics[width=3in]{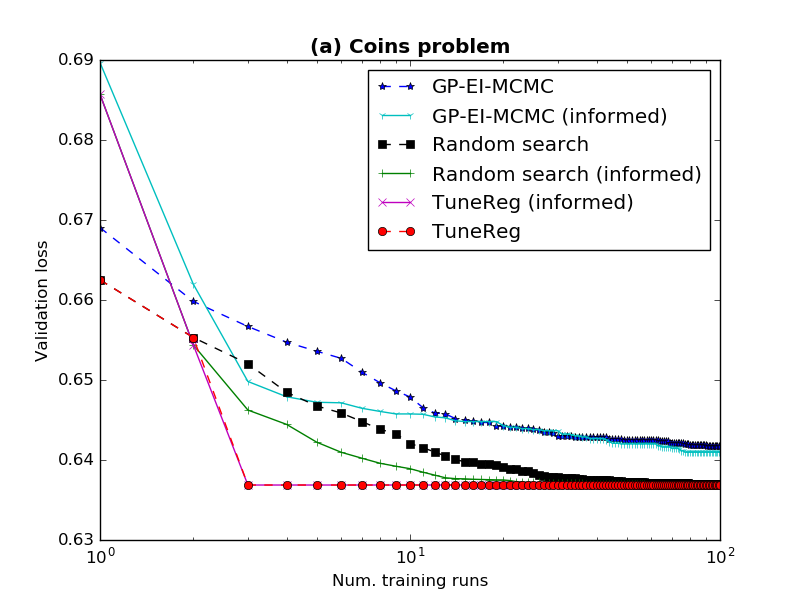}
	\includegraphics[width=3in]{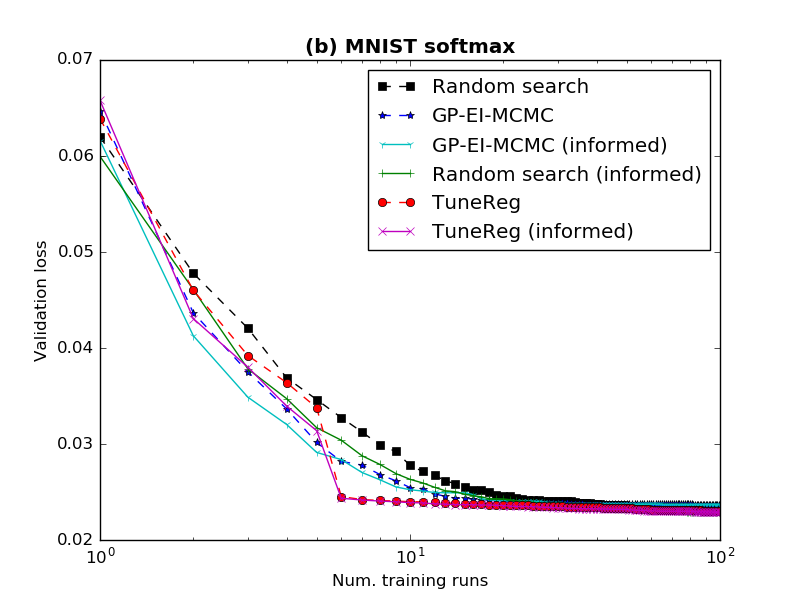}
	\includegraphics[width=3in]{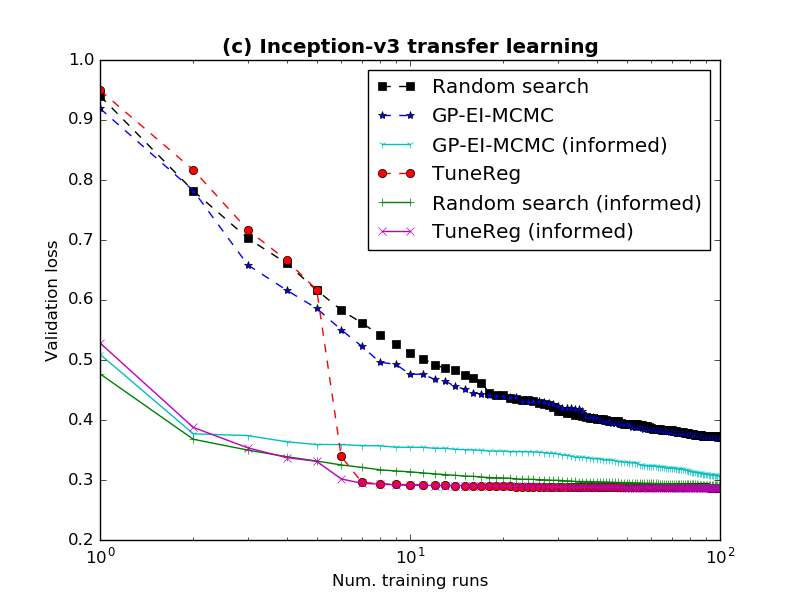}
	\caption{Comparison of algorithms for tuning regularization hyperparameters, with and without informative hyperparameter scaling and feasible range.
	For problems with $k$ hyperparameters, \ref{alg:tune_reg} is able to ``jump" to near-optimal hyperparameters after randomly sampling $k+1$ initial hyperparameter vectors ($k=1$ in plot (a), $k=4$ in plots (b) and (c)).
	}
\label{fig:tuning}
\end{center}
\end{figure}

\section {Related Work}

The high-level idea that a good regularizer should provide an estimate of the
generalization gap appears to be commonly known, though we are not aware of a specific source.  What is novel in our work is the quantitative
characterization of good regularizers in terms of slack and suboptimality,
and the corresponding linear-programming-based algorithm for finding
an approximately optimal regularizer.

Bounding the generalization gap is the subject of a vast literature, which
has focused primarily on worst-case bounds (see \citet{zhang2017understanding}
and references therein).  These bounds have the advantage
of holding with high probability for all models, but are typically too weak
to be used effectively as regularizers.  In contrast, our empirical
upper bounds are only guaranteed to hold for the models we have seen,
but are tight enough to yield improved generalization.
Empirically, \citet{jiang2019predicting} showed that the generalization gap
can be accurately predicted using a linear model based on margin information;
whether this can be used for better regularization is an interesting open question.

An alternative to \ref{alg:tune_reg} is to use gradient-based methods
which (approximately) differentiate validation loss with respect to the
regularization hyperparameters \cite{pedregosa2016hyperparameter}.
Though this idea appears promising, current methods have
not been shown to be effective on problems with more than one hyperparameter,
and have not produced improvements as dramatic as the ones shown in Figure~\ref{fig:tuning}.

\section{Conclusions}

We have shown that the best regularizer is the one that gives the tightest bound on the generalization gap, where
tightness is measured in terms of \emph{suboptimality-adjusted slack}.
We then presented the \ref{alg:llr} algorithm, which computes approximately optimal
hyperparameters for a linear regularizer using linear programming.
Under certain Bayesian assumptions, we showed that \ref{alg:llr} recovers an optimal
regularizer given data from only $k+1$ training runs, where $k$ is the number of hyperparameters.
Building on this, we presented the \ref{alg:tune_reg} algorithm for tuning regularization hyperparameters, and showed that it outperforms state-of-the-art alternatives on both real and
synthetic data.

Our experiments have only scratched the surface of what is possible using our
high level approach.
Promising
areas of future work include (a) attempting to discover novel regularizers, for example by making a
larger number of basis features available to the \ref{alg:llr} algorithm, and (b) adjusting regularization
hyperparameters on-the-fly during training, for example using an online variant of \ref{alg:tune_reg}.

\clearpage
\bibliography{learnreg}
\bibliographystyle{icml2019}

\section*{Appendix A: Additional Proofs}

To prove Theorem 2 we will use the following lemma.

\begin{lemma} \label {lem:sufficient}
Suppose that for some functions $\mf$ and $\fv$, the loss function
is of the form:
\[
	\ell(\ex, \w) = \mf(\w) \cdot \fv(\ex) \mbox { .}
\]
Furthermore, suppose there exist constants $n_0$ and $\fvz$ such that,
for any training set $\exset = \set{\ex_i\ |\ 1 \le i \le n}$,
where $\exset \sim \D^n$ and $\D \sim \DD$,
\[
	\E_{\D \sim \DD} [ \E_{\ex \sim \D} [ \fv(\ex) ] \ |\ \exset ] =  \frac {\fvz + \sum_{i=1}^n \fv(\ex_i)} {n + n_0} \ee
\]
Then, there exists a perfect, Bayes-optimal regularizer of the form:
\[
	R^*(\w) = \frac 1 n \mf(\w) \cdot \fvz \mbox { .}
\]
\end{lemma}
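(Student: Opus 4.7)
The plan is a straightforward two-sided calculation: expand both $\hat L(\w) + R^*(\w)$ and $\eL(\w, \exset)$ using the bilinear structure of the loss, and observe that the two expressions differ only by a positive multiplicative constant, yielding a monotone function $h$.

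First I would expand $\hat L(\w) + R^*(\w)$. By the assumed form $\ell(\ex, \w) = \mf(\w) \cdot \fv(\ex)$, the empirical loss is $\hat L(\w) = \frac{1}{n} \mf(\w) \cdot \sum_{i=1}^n \fv(\ex_i)$. Adding $R^*(\w) = \frac{1}{n} \mf(\w) \cdot \fvz$ and factoring $\mf(\w)$ out gives
\[
  \hat L(\w) + R^*(\w) = \frac{1}{n} \, \mf(\w) \cdot \Bigl(\fvz + \sum_{i=1}^n \fv(\ex_i)\Bigr) \ee
\]

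Second, I would compute $\eL(\w, \exset)$. Linearity of expectation in $\ex$ gives $L(\w, \D) = \mf(\w) \cdot \E_{\ex \sim \D}[\fv(\ex)]$, so $\mf(\w)$ (which does not depend on $\D$ or $\exset$) pulls outside the posterior expectation. Applying the hypothesis on the posterior mean of $\fv$ then yields
\[
  \eL(\w, \exset) = \mf(\w) \cdot \frac{\fvz + \sum_{i=1}^n \fv(\ex_i)}{n + n_0} \ee
\]

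Comparing the two expressions, I would conclude $\hat L(\w) + R^*(\w) = \frac{n + n_0}{n} \, \eL(\w, \exset)$ for every $\w$. Taking $h(x) = \frac{n+n_0}{n} x$, which is strictly increasing whenever $n_0 > -n$ (and nonnegative for the standard conjugate-prior pseudo-count $n_0 \ge 0$), verifies the perfectness condition $\hat L(\w) + R^*(\w) = h(\eL(\w, \exset))$. Since a strictly monotone transformation preserves argmins, $\argmin_\w \{\hat L(\w) + R^*(\w)\} = \argmin_\w \{\eL(\w, \exset)\}$, so $R^*$ is Bayes-optimal as well.

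There is no substantive obstacle here: once the loss is bilinear in $\mf(\w)$ and $\fv(\ex)$, the vector $\mf(\w)$ commutes with every averaging and expectation operation, reducing the lemma to matching the two scalar weights $\tfrac{1}{n}$ and $\tfrac{1}{n+n_0}$ on the pseudo-count-augmented sum $\fvz + \sum_i \fv(\ex_i)$. The only mild subtlety worth flagging is verifying that $h$ is monotone in the relevant regime, which is immediate under the usual sign conventions for $n_0$.
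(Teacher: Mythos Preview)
Your proposal is correct and follows essentially the same route as the paper: both compute $\eL(\w,\exset)$ and $\hat L(\w)+R^*(\w)$ via the bilinear form, arrive at $\hat L(\w)+R^*(\w)=\frac{n+n_0}{n}\,\eL(\w,\exset)$, and conclude perfectness and Bayes-optimality from this affine relation. Your write-up is slightly more explicit about the monotonicity of $h$, but the argument is otherwise identical.
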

\begin{proof}
Let $	\eL(\w, \exset) \equiv \E_{D \sim \DD} [ E_{\ex \sim D}[\ell( \ex, \w )]\ |\ \exset ]$ be the conditional expected test loss.
By linearity of expectation,
\begin{align*}
	\eL(\w, \exset)
	& = \mf(\w) \cdot E_{D \sim \DD} [ E_{\ex \sim D}[\fv(\ex)]\ |\ \exset ]\\
	& = \mf(\w) \cdot \frac {\fvz + \sum_{i=1}^n \fv(\ex_i) } {n + n_0} \ee
\end{align*}
Meanwhile, average training loss is
$
	\hat L(\w) = \frac 1 n \mf(\w) \cdot \sum_{i=1}^n \fv(\ex_i)
$.
Thus,
\[
	(n + n_0) \eL(\w, \exset) - n \hat L(\w) =  n R^*(\w) \ee
\]
Rearranging, $\hat L(\w) + R^*(\w) = \frac {n + n_0} {n} \eL(\w, \exset)$, so $R^*$ is perfect and Bayes-optimal.
\end{proof}
\ignore{
\begin{proof}[Proof (sketch)]
Let $\eL(\w, \exset) = \E_{\D \sim \DD }[L(\w, \D)\ |\ \exset]$ denote the expected test
loss, given the training dataset $\exset$.  Using linearity of expectation,
it is straightforward to show
\[
	(n + n_0) \eL(\w, \exset) - \hat L(\w) = \mf(\w) \cdot \fvz =  R^*(\w) \ee
\]
This in turn implies $R^*$ is Bayes-optimal.
\end{proof}
}

\begin{proof}[Proof of Theorem 2]
By assumption, $P(\ex | \w)$ is an exponential family distribution,
meaning that for some functions $h$, $g$, $\eta$, and $T$, we have
\[
	P(\ex | \w) = h(\ex) g(\w) \exp(\eta(\w) \cdot T(\ex)) \mbox { .}
\]
Setting $\mf(\w) = \tup{-\log g(\w)} \concat -\eta(\w)$
and $\fv(\ex) = \tup{1} \concat T(\ex)$, we have
\[
	-\log(P(\ex | \w)) = \mf(\w) \cdot \fv(\ex) - \log h(\ex) \ee
\]
Because the $-\log h(\ex)$ term does not depend on $\w$, minimizing
$-\log(P(\ex | \w))$ is equivalent to using the loss function
$\ell(\ex, \w) = \mf(\w) \cdot \fv(\ex)$.

The conjugate prior for an exponential family has the form
\[
	P(\eta(\w)) = \frac {1} {Z_0} g(\w)^{n_0} \exp(\eta(\w) \cdot \tau_0)
\]
where $\tau_0$ and $n_0$ are hyperparameters.  One of the distinguishing
properties of exponential families is that when $\w^*$ is drawn
from a conjugate prior, the posterior expectation of $T(\ex)$
has a linear form \cite{diaconis1979conjugate}:
\[
	\E_{\w^* \sim P(\w)} [ \E_{\ex \sim P(\ex | \w^*)}[ T(\ex) ]\ |\ \exset] =  \frac {\tau_0 + \sum_{i=1}^n T(\ex_i)} {n_0 + n} \ee
\]
Thus if we set $\fvz = \tup{n_0} \concat \tau_0$,
\[
	\E_{\D \sim \DD} [ \E_{\ex \sim \D} [ \fv(\ex) ] \ |\ \exset ] =  \frac {\fvz + \sum_{i=1}^n \fv(\ex_i)} {n + n_0} \ee
\]
Lemma~\ref{lem:sufficient} then shows that a perfect regularizer is:
\begin{align*}
	R^*_1(\w)
	& = \frac 1 n \mf(\w) \cdot \fvz \\
	& = \frac 1 n \left ( -n_0 \log (g(\w)) - \tau_0 \cdot \eta(\w) \right ) \\
	& = \frac 1 n \left ( -\log P(\eta(\w)) - \log(Z_0) \right ) \ee
\end{align*}
Because $R^*_1$ and $R^*$ differ by a constant, $R^*$ is also perfect.
\end{proof}

\end{document}